\newcommand{\tensor}[1]{\mathcal{#1}}
\newcommand{\jump}[1]{\ensuremath{[\![#1]\!]} }
\newcommand\sbullet[1][.5]{\mathbin{\vcenter{\hbox{\scalebox{#1}{$\bullet$}}}}}
\renewcommand{\ll}{\langle\!\langle}
\renewcommand{\gg}{\rangle\!\rangle}
\title{Tensor Regression Networks with various Low-Rank Tensor Approximations}
\author{
  Xingwei~Cao$^{\dagger}$, Guillaume~Rabusseau$^{\ddagger }$\\
 Tensor Learning Unit, RIKEN Center for AIP, Tokyo, Japan$^\dagger$\\
 School of Computer Science, McGill University, Quebec, Canada$^\ddagger $\\
  \texttt{ \{xingwei.cao, guillaume.rabusseau\}@mail.mcgill.ca}
}
\date{}
\newcommand{\eg}{e.g.\ }
\theoremstyle{plain}
\newtheorem{thm}{Theorem}[]
\newtheorem{lem}[thm]{Lemma}
\newtheorem{prop}[thm]{Proposition}
\theoremstyle{definition}
\newtheorem{defn}{Definition}[]
\begin{document}

\maketitle

\begin{abstract}

Tensor regression networks achieve high compression rate of neural networks while having slight impact on performances.
They do so by imposing low tensor rank structure on the weight matrices of fully connected layers.
In recent years, tensor regression networks have been investigated from the perspective of their compressive power, however, the regularization effect of enforcing low-rank tensor structure has not been investigated enough.
We study tensor regression networks using various low-rank tensor approximations, aiming to compare the compressive and regularization power of different low-rank constraints.
We evaluate the compressive and regularization performances of the proposed model with both deep and shallow convolutional neural networks.
The outcome of our experiment suggests the superiority of Global Average Pooling Layer over Tensor Regression Layer when applied to deep convolutional neural network with CIFAR-10 dataset.
On the contrary, shallow convolutional neural networks with tensor regression layer and dropout achieved lower test error than both Global Average Pooling and fully-connected layer with dropout function when trained with a small number of samples.

\end{abstract}

\section{Introduction} \label{sc:intro}
Tensor has been attracting increasing interests from the machine learning community over past decades. One of the reasons for such appreciation towards tensor is the natural representation of multi-modal data using the tensor structure. Such multi-modal dataset are often encountered in scientific fields including image analysis \cite{liu2013tensor}, signal processing \cite{cichocki2015tensor} and spatio-temporal analysis \cite{bahadori2014fast,yu2016learning}.
Tensor methods allow statistical models to efficiently learn multilinear relationship between inputs and outputs by leveraging multilinear algebra and efficient low-rank constraints. The low-rank constraints on higher-order multivariate regression can be interpreted as a regularization technique. As shown in \cite{rabusseau2016low}, efficient low-rank multilinear regression model with tensor response can improve the performance of regression.

Incorporating tensor methods into deep neural network has become a prominent area of studies. In particular, over the past decade, tensor decomposition and approximation algorithms have been introduced to deep neural networks, notably for 1) efficient compression of the model with low-rank constraints \cite{novikov2015tensorizing} and 2) leveraging the multi-modal structure of the high-dimensional dataset \cite{kossaifi2017tensor}.
For illustration, Kossaifi et al. proposed tensor regression layer~(TRL) which replaces the vectorization operation and fully-connected layers of Convolutional Neural Networks~(CNNs) with higher-order multivariate regression \cite{kossaifi2017tensor}. The advantage of such replacement is the high compression rate of the model while preserving multi-modal information of dataset by enforcing efficient low-rank constraints. Given such high-dimensional dataset, the vectorization operation will lead to the loss of multi-modal information. The higher-level dependencies among various modes are lost when the data is mapped to a linear space. For instance, applying flattening operation to a colored image ($3$rd-order tensor) will remove the relationship between the red-channel and the blue-channel. Tensor regression layer is able to capture such multi-modal information by performing multilinear regression tasks between the output of the last convolutional layer and the softmax.

Following~\cite{kossaifi2017tensor}, we investigate the property and performance of tensor regression layers from the perspectives of regularization and compression. We interpret low-rank constraints as a regularization technique for higher-order multivariate regression and enforce low-rank constraints on the weight tensor between output tensors of CNN and output vectors. Furthermore, we compare tensor regression layer with various tensor decomposition approximations. We aim to provide a comparative insight on different low-rank constraints that can be enforced on higher-order multivariate regression. We compare the performances of TRL using Tucker, CP and Tensor Train decompositions in a small standard CNN on MNIST and Fashion-MNIST. We also investigate such comparison in Residual Networks (ResNet)~\cite{he2016deep, he2016identity} on CIFAR-10. To investigate the regularization effect, we employed shallow CNNs and trained them with different numbers of training samples and compare the performances.

We show that a compression rate of 54$\times$ can be achieved using TT decomposition with a sacrifice of accuracy less than 0.3\% with respect to the weight matrix of a 32-Layer Residual Network with fully-connected layer on CIFAR-10 dataset. Surprisingly, we also show that an even better compression rate with a smaller loss in accuracy on CIFAR-10 can be achieved by simply using global average pooling~(GAP) followed by a small fully connected layer. However, using the same trick on the smaller CNN on MNIST led to very poor results.

The remaining of this paper is organized as follows. We start by reviewing background knowledge of multilinear algebra and tensor decomposition formats in Section~\ref{sc:back}. In Section~\ref{sc:trl}, we present and investigate tensor regression layer with different tensor decomposition formats. We show that global average pooling~GAP) layer is a special case of TRL with Tucker decomposition in Section~\ref{sc:gap}. In Section~\ref{sc:observation_on_rank_constraints} we present a simple analysis of low-rank constraints showing how particular choices of the tensor rank parameters can drastically affect the expressiveness of the network.  We demonstrate empirical performance of low-rank TRL in Section~\ref{sc:exp} followed by discussion and conclusion of our work in Section~\ref{sc:conc}.

\section{Background} \label{sc:back}

\begin{figure}
\centering
	\subfloat[Tensor Train Decomposition.]{
	\scalebox{0.75}[0.75]{\begin{tikzpicture}
	    \begin{scope}[every node/.style={minimum size=3.0em}]
	    \node[draw, shape=circle] (g1) at 			(0,0) {$\tensor{G}^{(1)}$};
	    \node[draw, shape=circle] (g2) at 			(2.0,0) {$\tensor{G}^{(2)}$};
	    \node[draw, shape=circle] (g3) at 			(4.0,0) {$\tensor{G}^{(3)}$};
	    \node[draw, shape=circle] (g4) at 			(6.0,0) {$\tensor{G}^{(4)}$};
    
	    \draw (g1) -- (g2) node[midway,above] {$R_1$};
	    \draw (g2) -- (g3) node[midway,above] {$R_2$};
	    \draw (g3) -- (g4) node[midway,above] {$R_3$};
	    \draw (g1) -- +(0,-1.5cm) node[midway, right = -5pt]{$I_1$};
	    \draw (g2) -- +(0,-1.5cm) node[midway, right = -5pt]{$I_2$};
	    \draw (g3) -- +(0,-1.5cm) node[midway, right = -5pt]{$I_3$};
	    \draw (g4) -- +(0,-1.5cm) node[midway, right = -5pt]{$I_4$};
	    
	    \end{scope}
	\end{tikzpicture}}
	} \hspace{30pt} \subfloat[Tucker Decomposition.]{
	\scalebox{0.75}[0.75]{\begin{tikzpicture}
	    \begin{scope}[every node/.style={minimum size=3.0em}]
	    \node[draw, shape=circle] (g) at 			(3.0,0) {$\tensor{G}$};
	    \node[draw, shape=circle] (u1) at 			(0,-1.5) {$\mathbf{U}^{(1)}$};
	    \node[draw, shape=circle] (u2) at 			(2.0,-1.5) {$\mathbf{U}^{(2)}$};
	    \node[draw, shape=circle] (u3) at 			(4.0,-1.5) {$\mathbf{U}^{(3)}$};
	    \node[draw, shape=circle] (u4) at 			(6.0,-1.5) {$\mathbf{U}^{(4)}$};
	    
        \draw [-, out=180, in=70, bend right=20, looseness = 1.5] (g.west) to node[above]{$R_1$}(u1.north);
	    \draw (g) edge[out=200,in=90] (u2) node[midway, right=45pt, below]{$R_2$};
	    \draw (g) edge[out=-20,in=90] (u3) node[midway, right=125pt, below]{$R_3$};
        \draw [-, out=0, in=110, bend left=20, looseness = 1.5] (g.east) to node[above]{$R_4$}(u4.north);
        
	    \draw (u1) -- +(0,-1.5cm) node[midway, right = -5pt]{$I_1$};
	    \draw (u2) -- +(0,-1.5cm) node[midway, right = -5pt]{$I_2$};
	    \draw (u3) -- +(0,-1.5cm) node[midway, right = -5pt]{$I_3$};
	    \draw (u4) -- +(0,-1.5cm) node[midway, right = -5pt]{$I_4$};
        
	    \end{scope}
	\end{tikzpicture}}
	}

\caption{Tensor network representations of Tucker and Tensor Train decomposition of an input tensor $\tensor{X}$ in a space $\mathbb{R}^{I_1 \times I_2 \times I_3 \times I_4}$. Circular nodes and edges represent tensors and contraction operation between two tensors respectively.} \label{fig:tt_t_basic}
\end{figure}
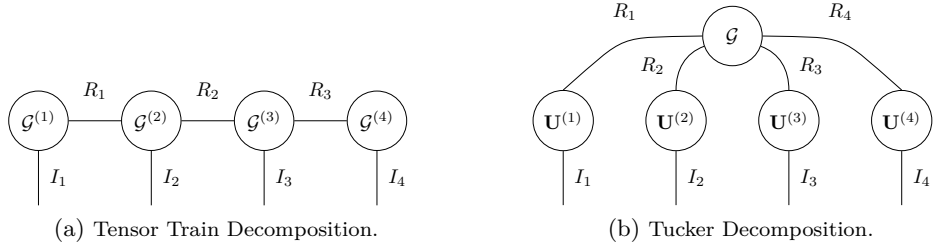

\subsection{Tensor Algebra}

We begin with a concise review of notations and basics of tensor algebra. For a more comprehensive review, we refer the reader to \cite{kolda2009tensor}. Throughout this paper, a vector is denoted by boldface lowercase letter, \eg $\mathbf{v}\in \mathbb{R}^{I_1}$. Matrices and higher-order tensors are denoted by boldface uppercase and calligraphic letters respectively, \eg  $\mathbf{M} \in \mathbb{R}^{I_1 \times I_2}$ and $\tensor{T}\in \mathbb{R}^{I_1 \times I_2 \times \cdots \times I_N}$. Given an $N$th-order tensor $\tensor{T}\in\mathbb{R}^{I_{1}\times I_{2}\times\cdots\times I_{N}}$, its $(i_1, i_2, \ldots, i_N)$th entry is denoted by $\tensor{T}_{i_{1},i_{2},\ldots , i_{N}}$ or $(\tensor{T})_{i_{1}, i_{2}, \cdots , i_{N}}$, where $i_{n}=1, 2,\ldots, I_{n}, \forall n\in[1,N]$. The notation $[N,M]$ denotes the range of integers from $N$ to $M$ inclusive.
Given a $3$rd order tensor $\tensor{T} \in\mathbb{R}^{I\times J\times K}$, its \emph{slices} are the matrices obtained by fixing all but two indices; the horizontal, lateral and frontal slices of  $\tensor{T}$ are denoted by $\mathbf{T}_{i,:,:}\in\mathbb{R}^{J\times K}$,  $\mathbf{T}_{:,j,:}\in\mathbb{R}^{I\times K}$ and  $\mathbf{T}_{:,:,k}\in\mathbb{R}^{I\times J}$ respectively. 
Similarly, the mode-\emph{n} fibers of  $\tensor{T}$ are the vectors obtained by fixing every index but the \emph{n}-th one. The mode-\emph{n} matricization or mode-\emph{n} unfolding of a tensor $\tensor{T}$ is the matrix having its mode-$n$ fibers as columns and is denoted by $\mathbf{T}_{(n)}$. 
Given $n$ vectors $\mathbf{v}_1\in \mathbb{R}^{I_1},\mathbf{v}_2\in \mathbb{R}^{I_2},\dots,\mathbf{v}_n \in \mathbb{R}^{I_N}$, the outer product of these vectors is denoted by $\mathbf{v}_1\circ \mathbf{v}_2\circ \cdots \circ \mathbf{v}_n\in\mathbb{R}^{I_{1}\times I_{2}\times\cdots\times I_{N}}$ and is defined by $\left( \mathbf{v}_1\circ \mathbf{v}_2\circ \cdots \circ \mathbf{v}_n \right)_{i_1, i_2, \cdots ,i_n} = (\mathbf{v}_1)_{i_1}(\mathbf{v}_2)_{i_2}\cdots (\mathbf{v}_n)_{i_n}$ for all $i_k \in [1,I_k]$ where $k \in [1,N]$. An~N-th order tensor $\tensor{X}\in\mathbb{R}^{I_1 \times \cdots \times I_N}$ is called rank-one if it can be written as the outer product of N vectors (i.e. $\tensor{X}=\mathbf{v}_1\circ \mathbf{v}_2\circ \cdots \circ \mathbf{v}_N$).
The $n$-mode product of a tensor $\tensor{X}\in\mathbb{R}^{I_{1}\times I_{2}\times\cdots\times I_{N}}$ with a matrix $\mathbf{U}\in\mathbb{R}^{J\times I_{n}}$ is denoted by $\tensor{X}\times_n \mathbf{U}\in \mathbb{R}^{I_1 \times \cdots \times I_{n-1} \times J \times I_{n+1} \times \cdots \times I_N}$ and is defined by 
$$(\tensor{X}\times_n\mathbf{U})_{i_1, \cdots , i_{n-1} , j , i_{n+1},\cdots ,i_N} = \sum_{i_n =1}^{I_n} \tensor{X}_{i_1, i_2, \cdots, i_N}\, u_{j , i_n}$$
for all $i_k\in [1,I_k]$, $j\in[1,J]$ where $k \in [1,N]$. Similarly, we denote an $n$-mode product of a tensor $\tensor{X}\in\mathbb{R}^{I_{1}\times I_{2}\times\cdots\times I_{N}}$ and a vector $\mathbf{v}\in\mathbb{R}^{I_n}$ by $\tensor{X}\sbullet[.75]_n \mathbf{v} \in \mathbb{R}^{I_1 \times \cdots \times I_{n-1}  \times I_{n+1} \times \cdots \times I_N}$ for all $n\in[1,N]$ and it is defined by $\tensor{X}\sbullet[.75]_n \mathbf{v} = \tensor{X}\times_n \mathbf{v}^T$.

The \emph{Kronecker product} of matrices $\mathbf{A}\in\mathbb{R}^{I\times J}$ and $\mathbf{B}\in\mathbb{R}^{K\times L}$ is the block matrix $(\mathbf{A}_{i,j}\mathbf{B})_{i,j}$ of size $IK\times JL$ and is denoted by $\mathbf{A}\otimes \mathbf{B}$. Given matrices $\mathbf{A}$ and $\mathbf{B}$, both of  size ${I\times J}$, their \emph{Hadamard product}~(or component-wise product) is denoted by $\mathbf{A}*\mathbf{B}$ and defined by $(\mathbf{A}*\mathbf{B})_{i,j} =\mathbf{A}_{i,j}\mathbf{B}_{i,j}$. The \emph{Khatri-Rao product} $\mathbf{A} \odot \mathbf{B}$ of matrices $\mathbf{A}\in\mathbb{R}^{I\times K}$ and $\mathbf{B}\in\mathbb{R}^{J\times K}$ is the ${IJ\times K}$ matrix  defined by
\begin{equation}
  \mathbf{A} \odot \mathbf{B} = \left[
    \begin{array}{cccc}
      \mathbf{a}_1 \otimes \mathbf{b}_1 & \mathbf{a}_2 \otimes \mathbf{b}_2 & \cdots & \mathbf{a}_K \otimes \mathbf{b}_K
    \end{array}
  \right]
\end{equation}
where $\mathbf{a}_i$~(resp. $\mathbf{b}_i$) denotes the $i$th column of  $\mathbf{A}$~(resp. $\textbf{B}$).

\subsection{Various Tensor Decompositions}
In this section we present three of the commonly used tensor decomposition formats: Candecomp/Parafac, Tucker and Tensor-Train.

{\bf CP decomposition.} \hspace{5pt} The CP decomposition \cite{carroll1970analysis, harshman1970foundations}  approximates a tensor with a summation of rank-one tensors \cite{kolda2009tensor}. The rank of the decomposition is simply the number of rank-one tensors used to approximate the input tensor: given an input tensor $\tensor{X}\in\mathbb{R}^{I_{1}\times I_{2}\times\cdots\times I_{N}}$, its approximation with a CP decomposition of rank $R$  is defined by

\begin{equation} \label{eq:cp_def}
\tensor{X} \approx \sum_{j=1}^{R}{\mathbf{a}^{(j)}_1 \circ \cdots \circ \mathbf{a}^{(j)}_N} = \jump{\mathbf{A}^{(1)}, \mathbf{A}^{(2)}, \cdots , \mathbf{A}^{(N)} }.
\end{equation}

In Eq.~\eqref{eq:cp_def}, $\jump{\mathbf{A}^{(1)}, \mathbf{A}^{(2)}, \cdots , \mathbf{A}^{(1)} }$ denotes the CP approximation of $\tensor{X}$ where each matrix $\mathbf{A}^{(i)}\in\mathbb{R}^{I_i\times R}$ consists of the $R$ column vectors $\mathbf{a}_i^{(j)}$ for $j\in[1,R]$.

We have the following useful expression of Eq.~\eqref{eq:cp_def} in terms of the matricization of $\tensor{X}$:

\begin{equation} \label{eq:cp_mtr}
\tensor{X}_{(n)} \approx \mathbf{A}^{(n)}\left( \mathbf{A}^{(N)} \odot \cdots \odot \mathbf{A}^{(n+1)} \odot \mathbf{A}^{(n-1)} \odot \cdots \odot \mathbf{A}^{(1)} \right)^T
\end{equation}

{\bf Tucker decomposition.} \hspace{5pt} The Tucker decomposition approximates a tensor $\tensor{X}\in\mathbb{R}^{I_{1}\times I_{2}\times\cdots\times I_{N}}$ by the product of a core tensor $\tensor{G}\in\mathbb{R}^{R_{1}\times R_{2}\times\cdots\times R_{N}}$ and $N$ factor matrices $\mathbf{U}^{(i)}\in\mathbb{R}^{I_i \times R_i}$ for $i\in[1,N]$:
\begin{equation} \label{eq:tc_def}
\tensor{X} \approx \tensor{G} \times_1 \mathbf{U}^{(1)} \times_2 \cdots \times_N \mathbf{U}^{(N)} = \jump{\tensor{G}; \mathbf{U}^{(1)}, \cdots , \mathbf{U}^{(N)}}
\end{equation}

The matricization of $\tensor{X}$ from Eq.~\eqref{eq:tc_def} can be written as

\begin{equation} \label{eq:tc_mtr}
\tensor{X}_{(n)} \approx \mathbf{U}^{(n)} \tensor{G}_{(n)} \left( \mathbf{U}^{(N)} \otimes \cdots \mathbf{U}^{(n+1)} \otimes \mathbf{U}^{(n-1)} \otimes \cdots \otimes \mathbf{U}^{(1)} \right)^T
\end{equation}

The tuple $(R_1 , \cdots , R_N )$ is the rank of the Tucker decomposition and determines the size of the core tensor $\tensor{G}$. An example of a Tucker approximation of a fourth order tensor is given in Figure \ref{fig:tt_t_basic}.

{\bf Tensor train decomposition.} \hspace{5pt} The tensor train (TT) decomposition \cite{oseledets2011tensor} provides a space-efficient representation for higher-order tensors. It approximates a tensor $\tensor{X}\in\mathbb{R}^{I_{1}\times I_{2}\times\cdots\times I_{N}}$ with the product of third order tensors $\tensor{G}^{(i)}\in\mathbb{R}^{R_{i-1} \times  I_i \times R_{i}}$ called core tensors or simply cores. The rank of the TT decomposition is the tuple $(R_0, R_1,
\cdots,R_N)$ where $R_0 = R_N = 1$. 

Given a tensor $\tensor{X}\in\mathbb{R}^{I_{1}\times I_{2}\times\cdots\times I_{N}}$, the approximation by TT decomposition is defined as

\begin{equation} \label{eq:tt_def}
\tensor{X}_{i_{1}, i_{2}, \ldots , i_{N}} \approx \mathbf{G}^{(1)}_{i_1 , :} \times \mathbf{G}^{(2)}_{:, i_2, :} \times \cdots \times \mathbf{G}^{(N)}_{:, i_N}
=\ \ll \tensor{G}^{(1)}, \cdots , \tensor{G}^{(N)} \gg
\end{equation}
where $\times$ denotes the matrix product.

In order to express Eq.~\eqref{eq:tt_def} in terms of matricizations of $\tensor{X}$, we first define the following contraction operation on core tensors.

\begin{defn} \label{df:tt_core}
Given a set of core tensors $\tensor{G}^{(i)}$ in Eq.~\eqref{eq:tt_def} for $i\in[1,N]$, we define $\tensor{G}^{<n}$ as the product of core tensors $\tensor{G}^{(j)}$ for $j\in[1,n-1]$:
\begin{equation} \label{eq:tt_seq_def}
\tensor{G}^{<n}_{i_1 , \cdots , i_{n-1} , i_{n}} = \mathbf{G}^{(1)}_{i_1 , :}  \mathbf{G}^{(2)}_{:, i_2 , :}  \cdots\mathbf{G}^{(n-1)}_{:,  i_{n-1} , i_n}.
\end{equation}

Similarly to $\tensor{G}^{<n}$, we define $\tensor{G}^{>n}$ as the product of core tensors $\tensor{G}^{(j)}$ for $j\in[n+1,N]$ where $\tensor{G}^{<n}\in\mathbb{R}^{I_1 \times \cdots \times I_{n-1} \times R_{n-1}}$ and $\tensor{G}^{>n}\in\mathbb{R}^{R_n \times I_{n+1} \times \cdots \times I_{N}}$. A tensor network representation of core separation is provided in Figure~\ref{fig:tt_separation}.
\end{defn}

\begin{figure}
\centering
	\scalebox{0.75}[0.75]{\begin{tikzpicture}
	    \begin{scope}[every node/.style={minimum size=3.0em}]
	    \node[draw, shape=circle] (g1) at 			(0,0) {$\tensor{G}^{(1)}$};
	    \node[draw, shape=circle] (g2) at 			(2.0,0) {$\tensor{G}^{(2)}$};
	    \node[draw, shape=circle] (g3) at 			(4.0,0) {$\tensor{G}^{(3)}$};
	    \node[draw, shape=circle] (g4) at 			(6.0,0) {$\tensor{G}^{(4)}$};
	    \node at (3.0,1.5) {\includegraphics[width=0.5cm]{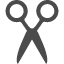}};
	    \node[text width=4cm] at (9.0,0) {{\large $\implies$}};
	    
	     \node[text width=4cm] at (15.0,-1.0) {};
	    \node[draw, shape=circle] (c1) at 			(9.0,0) {$\tensor{G}^{(1)}$};
	    \node[draw, shape=circle] (c2) at 			(11.0,0) {$\tensor{G}^{(2)}$};
	    
	    \node[text width=6cm] at (13.5,1.5) {\large $\tensor{G}^{<3}$};
	     \node at (13.0,1.5) {\includegraphics[width=0.5cm]{sc.png}};
	     \node[text width=6cm] at (18.5,1.5) {\large $\tensor{G}^{>2}$};
	    
	    \node[draw, shape=circle] (c3) at 			(15.0,0) {$\tensor{G}^{(3)}$};
	    \node[draw, shape=circle] (c4) at 			(17.0,0) {$\tensor{G}^{(4)}$};
    
	    \draw (g1) -- (g2) node[midway,above] {$R_1$};
	    \draw (g2) -- (g3) node[pos=0.2,above] {$R_2$};
	    \draw (g3) -- (g4) node[midway,above] {$R_3$};
	    \draw (g1) -- +(0,-1.5cm) node[midway, right = -5pt]{$I_1$};
	    \draw (g2) -- +(0,-1.5cm) node[midway, right = -5pt]{$I_2$};
	    \draw (g3) -- +(0,-1.5cm) node[midway, right = -5pt]{$I_3$};
	    \draw (g4) -- +(0,-1.5cm) node[midway, right = -5pt]{$I_4$};
	    
	    \draw (c1) -- (c2) node[midway,above] {$R_1$};
	    \draw (c2) -- +(1.3cm ,0) node[midway, above]{$R_2$};
	    \draw (c1) -- +(0,-1.5cm) node[midway, right = -5pt]{$I_1$};
	    \draw (c2) -- +(0,-1.5cm) node[midway, right = -5pt]{$I_2$};
	    
	    \draw (c3) -- (c4) node[midway,above] {$R_3$};
	    \draw (c3) -- +(-1.3cm ,0) node[midway, above]{$R_2$};
	    \draw (c3) -- +(0,-1.5cm) node[midway, right = -5pt]{$I_3$};
	    \draw (c4) -- +(0,-1.5cm) node[midway, right = -5pt]{$I_4$};
	    
	   \draw [thick, dashed] (3.0,1.3) -- (3.0,-1.3);
	   \draw [thick, dashed] (13.0,1.3) -- (13.0,-1.3);
	    
	    \end{scope}
	\end{tikzpicture}}

\caption{Visualization of the product of cores given by Tensor Train decomposition of a  tensor $\tensor{X}$ in a space $\mathbb{R}^{I_1 \times I_2 \times I_3 \times I_4}$. The tensor network representations of $\tensor{G}^{>2}$ and $\tensor{G}^{<3}$ are presented.} \label{fig:tt_separation}
\end{figure}

Using Definition~\ref{df:tt_core}, the mode-\emph{n} unfolding of a tensor $\tensor{X} \approx\ \ll \tensor{G}^{(1)}, \cdots , \tensor{G}^{(N)} \gg$ in Eq.~\eqref{eq:tt_def} where $n\in [1,N]$ can be written as
\begin{equation} \label{eq:tt_mtr}
\tensor{X}_{(n)} \approx \tensor{G}^{(n)}_{(2)} \left( \tensor{G}^{>n}_{(1)} \otimes \tensor{G}^{<n}_{(n)} \right)
\end{equation}

\section{Tensor Regression Layer} \label{sc:trl}

\begin{figure}
\centering
	\scalebox{0.7}[0.7]{\begin{tikzpicture}
	    \begin{scope}[every node/.style={minimum size=3.0em}]
	    \node[text width=4cm] at (0,0) {{\large $f(\tensor{X}) = $}};
	    \node[draw, shape=circle] (x) at 		(0, 0) {$\tensor{X}$};
	    \node[draw, shape=circle] (w) at 		(2.0,0) {$\tensor{W}$};

    	\draw [-, out=90, in=90, bend left=55] (x.north) to node[above]{$I_1$}(w.north);
        \draw [-, out=270, in=270, bend right=55] (x.south) to node[above]{$I_3$}(w.south);
	    \draw (x) -- (w) 					node[midway, above] {$I_2$};
	    \draw (w) -- +(1.3cm,0) 			node[midway, above] {$I_4$};
	    \end{scope}
	\end{tikzpicture}}
	
\caption{Visualization of tensor regression layer (TRL) using tensor networks. $\tensor{X}\in \mathbb{R}^{I_1\times I_2 \times I_3}$ and a weight tensor $\tensor{W}\in \mathbb{R}^{I_1\times I_2 \times I_3 \times I_4}$ are represented by circular nodes connected by edges which represents contraction operation between two tensors.} \label{fig:reg}
\end{figure}
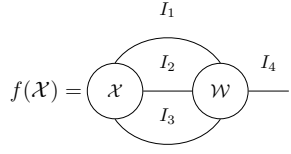

In this section, we introduce tensor regression layer via various low-rank tensor approximations. As stated in Section~\ref{sc:intro}, the last fully-connected layer of traditional CNN represents a large proportion of the model parameters. In addition to such large consumption of computational resources, the flattening operation leads to the loss of rich multi-modal information in the last convolutional layer. Tensor regression layer \cite{kossaifi2017tensor} replaces such last flattening and fully connected layers of CNN by a multilinear map with low Tucker rank. In this work, we explore imposing other low-rank constraints on the weight tensor and we compare the compression and regularization effects of using either CP, Tucker or TT decompositions.

Given an input tensor $\tensor{X}\in \mathbb{R}^{I_1\times \cdots \times I_N}$ and a weight tensor $\tensor{W}\in \mathbb{R}^{I_1\times \cdots \times I_N \times I_{N+1}}$, we investigate the function $f: \mathbb{R}^{ I_1\times \cdots \times I_N}\rightarrow \mathbb{R}^{I_{N+1}}$ where $I_{N+1}$ is the number of classes. Given such two tensors, the function $f$ is defined as

\begin{equation} \label{eq:formulation}
f(\tensor{X}) = \tensor{W}_{(N+1)} vec(\tensor{X}) + \mathbf{b}
\end{equation}
where $\mathbf{b}\in \mathbb{R}^{I_{N+1}}$ is a bias vector added to the product of $\tensor{W}$ and $\tensor{X}$. The tensor network representation of an example of Eq.~\eqref{eq:formulation} is given in Figure~\ref{fig:reg}. 
The main idea behind tensor tensor regression layers is to enforce a low tensor rank structure on $\tensor{W}$ in order to both reduce memory usage and to 
leverage the multilinear structure of the input $\tensor{X}$.

Throughout the paper, we denote a TRL with TT decomposition by TT-TRL. Similarly we use CP-TRL and Tucker-TRL for a TRL with CP or Tucker decomposition.

{\bf CP decomposition.} \hspace{5pt} First we investigate applying CP decomposition to approximate the weight tensor $\tensor{W}$. Using Eq.~\eqref{eq:cp_def} and Eq.\eqref{eq:cp_mtr}, Eq.~\eqref{eq:formulation} can be rewritten as

\begin{equation} \label{eq:formulation_cp}
\begin{split}
f(\tensor{X}) 	&\approx \left( \jump{\mathbf{A}^{(1)}, \mathbf{A}^{(2)}, \cdots , \mathbf{A}^{(N)}, \mathbf{A}^{(N+1)} } \right)_{(N+1)} vec(\tensor{X})  + \mathbf{b}\\
			&= \mathbf{A}^{(N+1)}\left( \mathbf{A}^{(N)} \odot \cdots \odot \mathbf{A}^{(1)} \right)^T  vec(\tensor{X}) + \mathbf{b}
\end{split}
\end{equation}
We can use this formulation to obtain the partial derivatives needed to implement gradient based optimization methods~(\eg backpropagation), indeed
\begin{equation} \label{eq:gradient_cp}
\frac{\partial f(\tensor{X})_i}{\partial (\mathbf{A}^{(n)})_{jk}} = \frac{(\partial \mathbf{A}^{(N+1)}\left( \mathbf{A}^{(N)} \odot \cdots \odot \mathbf{A}^{(1)} \right)^T vec(\tensor{X}) )_{i}}{\partial (\mathbf{A}^{(n)})_{jk}}
\end{equation} 
for all of the matrices $\mathbf{A}^{(n)}$ for $n \in [1,N+1]$. Furthermore, for a given mode $n$, we can naturally arrange these partial derivatives into a third order tensor 
$\partial f(\tensor{X}) / \partial \mathbf{A}^{(n)} \in \mathbb{R}^{I_{N+1} \times I_{n}  \times R}$ and obtain their expression using unfolding: 

$$ 
\left(\frac{\partial f}{\partial \mathbf{A}^{(n)}}\right)_{(2)} = (\tensor{X})_{(n)} (\mathbf{A}^{(N)} \odot \cdots \odot \mathbf{A}^{(n+1)}  \odot \mathbf{A}^{(n-1)}
\odot \cdots \odot \mathbf{A}^{(1)})  (\mathbf{A}^{(N+1)} \odot \mathbf{I}_R)^T
$$
for $n\in[1,N]$, and
$$
\left(\frac{\partial f}{\partial \mathbf{A}^{(N+1)}}\right)_{(1)} = \mathbf{I}_{I_{N+1}} \otimes (vec({\tensor{X}})^T (\mathbf{A}^{(N)} \odot \cdots \odot \mathbf{A}^{(1)})). 
$$

{\bf Tucker decomposition.} \hspace{5pt} As described in Section~\ref{sc:back}, the Tucker decomposition approximates an input tensor by a core tensor and a set of factor matrices. We can rewrite Eq.~\eqref{eq:formulation} using approximation of the tensor $\tensor{W}$ by Tucker decomposition as 

\begin{equation} \label{eq:formulation_t}
f(\tensor{X})  \approx \mathbf{U}^{(N+1)} \tensor{G}_{(N+1)} \left( \mathbf{U}^{(N)} \otimes \cdots \otimes \mathbf{U}^{(1)} \right)^T  vec(\tensor{X}) + \mathbf{b}
\end{equation}
where the tensor $\tensor{W}$ is approximated with
\begin{equation} \label{eq:formulation_t_decomp}
\tensor{W} \approx \tensor{G} \times_1 \mathbf{U}^{(1)} \times_2 \cdots \times_N \mathbf{U}^{(N)} \times_{N+1} \mathbf{U}^{(N+1)} = \jump{\tensor{G}; \mathbf{U}^{(1)}, \cdots , \mathbf{U}^{(N+1)}}
\end{equation}
The tensor network representations of Eq.~\eqref{eq:formulation_t} is shown in Figure~\ref{fig:tt_t}. Given a tensor of size $\mathbb{R}^{I_1 \times \cdots \times I_N}$, the function $f$ maps such tensor to the space $\mathbb{R}^{I_{N+1}}$ with low-rank constraints.

We can again  obtain concise expressions for the partial derivatives using unfoldings, for example:
\begin{equation} 
\left(\frac{\partial f(\tensor{X})}{\partial \mathbf{U}^{(1)}}\right)_{(1)} = \left(\jump{\tensor{G}; \mathbf{I}_{R_{1}}, \mathbf{U}^{(2)}, \cdots, \mathbf{U}^{(N+1)}} \right)_{(N+1)} \left(\tensor{X}_{(1)} \otimes \mathbf{I}_{R_1}\right)^T,
\end{equation}
\begin{equation} 
\left(\frac{\partial f(\tensor{X})}{\partial \mathbf{U}^{(N+1)}}\right)_{(1)} = \left(\left(\jump{\tensor{G};  \mathbf{U}^{(1)}, \cdots, \mathbf{U}^{(N)}, \mathbf{I}_{R_{N+1}}} \right)_{(N+1)} vec(\tensor{X})\right)^T \otimes \mathbf{I}_{I_{N+1}}
\end{equation}
and
\begin{equation} 
\left(\frac{\partial f(\tensor{X})}{\partial \tensor{G}}\right)_{(1)} = \mathbf{U}^{(N+1)} \otimes  vec( \jump{\tensor{X};  (\mathbf{U}^{(1)})^T, \cdots, (\mathbf{U}^{(N)})^T} )^T.
\end{equation}

{\bf Tensor Train decomposition.} \hspace{5pt} The tensor network visualization is given in Figure~\ref{fig:tt_t}, where the weight tensor $\tensor{W}$  is replaced with its TT representation.
Using Eq.~\eqref{eq:tt_def} and~\eqref{eq:tt_mtr}, in the case of TT decomposition Eq.~\eqref{eq:formulation} can be rewritten as

\begin{equation} \label{eq:formulation_tt}
\begin{split}
f(\tensor{X}) &\approx \left(\tensor{G}^{(N+1)}\right)_{(2)} \left( \left(\tensor{G}^{>N+1}\right)_{(1)} \otimes \left(\tensor{G}^{<N+1}\right)_{(N+1)} \right)  vec(\tensor{X}) + \mathbf{b}\\
&= \left(\tensor{G}^{(N+1)}\right)_{(2)} \left(\tensor{G}^{<N+1}\right)_{(N+1)}  vec(\tensor{X}) + \mathbf{b}
\end{split}
\end{equation}
where the second equality follows from the fact that $\tensor{G}^{>N+1}=1$. Similarly to the case of CP and Tucker decomposition, the 
partial derivatives can be summarized with
\begin{equation} 
\left(\frac{\partial f(\tensor{X})_i}{\partial \tensor{G}^{(n)}}\right)_{(2)}
 = 
 \tensor{X}_{(n)} \left(\left(\tensor{G}^{>n}_{:,\cdots,:,i}\right)_{(1)}\otimes \left(\tensor{G}^{<n}\right)_{(n)}\right)^T
\end{equation}
for all $i\in[1,I_{N+1}]$ and $n\in[1,N]$, and
\begin{equation} 
\left(\frac{\partial f(\tensor{X})}{\partial \tensor{G}^{(N+1)}}\right)_{(1)}^T
 = 
\left(\left(\tensor{G}^{<N+1}\right)_{(N+1)} vec(\tensor{X})\right)\otimes \mathbf{I}_{I_{N+1}}.
\end{equation}

\begin{figure}
\centering

	\subfloat[TT-TRL]{
	\scalebox{0.65}[0.65]{\begin{tikzpicture}
	    \begin{scope}[every node/.style={minimum size=3.4em}]
	    \node[draw, shape=circle] (x) at 		(0, 0) {$\tensor{X}$};
	    \node[draw, shape=circle] (g1) at 		(2.0,0) {$\tensor{G}^{(1)}$};
	    \node[draw, shape=circle] (g2) at 		(2.0,-1.5) {$\tensor{G}^{(2)}$};
	    \node(uabbv) at 					(2.0,-3.0) {$\cdots$};
	    \node[draw, shape=circle] (gnprev) at 	(2.0,-4.5) {$\tensor{G}^{(N)}$};
	    \node[draw, shape=circle] (gn) at 		(4.0,-4.5) {$\tensor{G}^{(N+1)}$};

	    \draw (x) -- (g1) node[midway,above] {$I_1$};
	    \draw (x) -- (g2);
	    \draw (x) -- (uabbv);
	    \draw (x) -- (gnprev) node[midway, left]{$I_N$};
	    
	    \draw (g1) -- (g2) node[midway, right]{$R_1$};	
	    \draw (g2) -- (uabbv);
	    \draw (uabbv) -- (gnprev);
	    \draw (gn) -- (gnprev) node[midway, above]{$R_{N}$};
	    \draw (gn) -- +(1.5cm,0) node[midway, above]{$I_{N+1}$};
	    \end{scope}
	\end{tikzpicture}\label{fig:tn_tt}}
	} \qquad \subfloat[Tucker-TRL]{
	\scalebox{0.65}[0.65]{\begin{tikzpicture}
	    \begin{scope}[every node/.style={minimum size=3.4em}]
	    \node[draw, shape=circle] (x) at 		(0, -2.25) {$\tensor{X}$};
	    \node[draw, shape=circle] (u1) at 		(2.0,0) {$\mathbf{U}^{(1)}$};
	    \node[draw, shape=circle] (u2) at 		(2.0,-1.5) {$\mathbf{U}^{(2)}$};
	    \node(uabbv) at 					(2.0,-3.0) {$\cdots$};
	    \node[draw, shape=circle] (u3) at 		(2.0,-4.5) {$\mathbf{U}^{(N)}$};
	    \node[draw, shape=circle] (g) at 		(4.0,-2.25) {$\tensor{G}$};
	    \node[draw, shape=circle] (u4) at 		(6.0,-2.25) {$\mathbf{U}^{\tiny (N+1)}$};

        \draw [-, out=90, in=180, bend left=45] (x.north) to node[above]{$I_1$}(u1.west);
	    \draw [-, out=45, in=180, bend left=15] (x.north east) to node[above]{$I_2$}(u2.west);
        \draw [-, out=270, in=180, bend right=45] (x.south) to node[below]{$I_N$}(u3.west);
	    \draw [-, out=-45, in=180, bend right=15] (x.south east) to (uabbv.west);

        \draw [-, out=0, in=90, bend left=45] (u1.east) to node[above]{$R_1$}(g.north);
	    \draw [-, out=0, in=135, bend left=15] (u2.east) to node[above]{$R_2$}(g.north west);
        \draw [-, out=0, in=270, bend right=45] (u3.east) to node[below]{$R_N$}(g.south);
	    \draw (u4) -- (g) node[midway, above]{$R_{N+1}$};
	    \draw [-, out=0, in=225, bend right=15] (uabbv.east) to (g.south west);
	    \draw (u4) -- +(1.5cm,0) node[midway, above]{$I_{N+1}$};
	    
	    \end{scope}
	\end{tikzpicture}}
	\label{fig:tn_tucker}}

\caption{Tensor Network visualization of tensor regression layer via Tucker and TT decompositions. Each label attached to corresponding edge represents the dimension shared between two tensors by tensor contraction operation.} \label{fig:tt_t}
\end{figure}
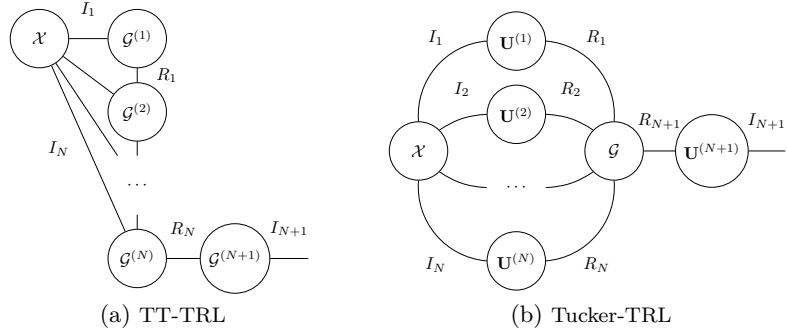

\section{Tensor perspective on Global Average Pooling layer} \label{sc:gap}

In this section, we provide an insight on Global Average Pooling layer from the perspective of tensor algebra. In particular, we show that GAP layer is a special case of Tucker-TRL.

It is a traditional practice to apply flattening operation to the output tensor (i.e. the last convolutional layer) before extracting its features. The problem of such approach lies in the generalization ability to the test dataset. Some work on deep neural networks show that fully-connected layers are prone to overfitting, thus leading to poor performance on test dataset \cite{hinton2012improving, lin2013network, krizhevsky2012imagenet}.

In order to tackle such generalizability problem and to provide regularization, Global Average Pooling (GAP) layer was presented by Lin et al. \cite{lin2013network}. It replaces the combination of vectorization operation and fully-connected layer with averaging operation over all slices along the output channel axis. The output of a GAP layer is thus a single vector of size the number of output channels. GAP layer was empirically shown to significantly reduce the number of model parameters in CNNs~\cite{lin2013network}.

The authors of~\cite{lin2013network} claims not only that GAP layer reduces the trainable model parameters but also that it can prevent the model from overfitting during the training stage. Over the last decade, GAP layer has been adopted to some of the most successful image classification models such as Residual Networks and VGG-16 \cite{he2016deep, simonyan2014very}.

More general interpretation of the convolutional output is that it is a high-order tensor $\tensor{X}$ in a space $\mathbb{R}^{I_1 \times \cdots \times I_N}$. Given such tensor, the GAP layer will output a vector $\mathbf{y}$ defined by

\begin{equation} \label{eq:gap_traditional}
\left( \mathbf{y} \right)_{i_N} = \prod\limits_{n=1}^{N-1} I^{-1}_n \sum\limits_{i_1=1}^{I_1} \sum\limits_{i_2 = 1}^{I_2} \cdots \sum\limits_{i_{N-1} = 1}^{I_{N-1}} \tensor{X}_{i_1 i_2 \dots i_N}\ \ \ 
\text{for all } i_N\in[1,I_N].
\end{equation}

We here assume that the axis for the output channel corresponds to the last mode of the tensor $\tensor{X}$.
We now show that a GAP layer mapping  $\tensor{X}$ to $\mathbf{y}\in\mathbb{R}^{I_N}$ is equivalent to a specific Tucker-TRL with rank $(1,\cdots,1,I_N,I_N)$. Indeed,
let 
$$\tensor{W} = \jump{\tensor{G}; \mathbf{u}^{(1)},\cdots, \mathbf{u}^{(N-1)}, \mathbf{U}^{(N)}, \mathbf{U}^{(N+1)}  } \in \mathbb{R}^{I_1\times\cdots\times I_{N} \times I_N}$$
 be the
regression tensor of a Tucker-TRL, with $(\mathbf{u}^{(n)})_{i_n} = 1/I_n$ where $i_n \in [1,I_n]$ for each $n\in[1,N-1]$, and $\mathbf{U}^{(N)} = \mathbf{U}^{(N+1)} = \tensor{G} = \mathbf{I}_{I_N}$.
We have
\begin{equation} \label{eq:gap_equivalent_trn}
\begin{split}
f(\tensor{X})_{i_N} & = 
 \left(\mathbf{U}^{(N+1)} \tensor{G}_{(N+1)} \left(\mathbf{U}^{(N)} \otimes \mathbf{u}^{(N-1)} \otimes \cdots \otimes \mathbf{u}^{(1)} \right)^T  vec(\tensor{X})\right)_{i_N}\\
&=
 (\tensor{X}\sbullet[.75]_1 \mathbf{u}^{(1)} \sbullet[.75]_{2} \cdots \sbullet[.75]_{N-1} \mathbf{u}^{(N-1)})_{i_N}\\
                    & = \sum\limits_{i_{N-1}=1}^{I_{N-1}} \cdots \sum\limits_{i_{2}=1}^{I_{2}}  \sum\limits_{i_{1}=1}^{I_{1}} \tensor{X}_{i_1 , i_2 , \dots , i_N} (\mathbf{u}^{(1)})_{i_1} (\mathbf{u}^{(2)})_{i_2} \cdots (\mathbf{u}^{(N-1)})_{i_{N-1}} \\
                    & =  \prod\limits_{n=1}^{N-1} I_n^{-1} \sum\limits_{i_1=1}^{I_1} \sum\limits_{i_2 = 1}^{I_2} \cdots \sum\limits_{i_{N-1} = 1}^{I_{N-1}} \tensor{X}_{i_1 , i_2 , \dots  , i_N}\\
&=
(\mathbf{y})_{i_N}.
\end{split}
\end{equation}
Observe that the composition of a GAP layer with a fully connected layer mapping $\mathbf{y}\in \mathbb{R}^{I_N}$ to $\mathbf{z}\in \mathbb{R}^O$ can also be achieved using a unique Tucker-TRL
by setting  $\mathbf{U}^{(N+1)}\in\mathbb{R}^{I_N\times O}$ to be the weight matrix of the fully connected layer instead of the identity. A graphical representation of this equivalence is shown in Figure~\ref{fig:tensor_network_gap}.

\begin{figure}
\centering
	
    \subfloat[Substitute matrix to vector]{
 	\scalebox{0.55}[0.55]{
    \begin{tikzpicture}
	    \begin{scope}[every node/.style={minimum size=3.4em}]
	    \node[draw, shape=circle] (x) at 		(0, -2.25) {$\tensor{X}$};
	    \node[draw, shape=circle] (u1) at 		(2.0,0) {$\mathbf{u}^{(1)}$};
	    \node[draw, shape=circle] (u2) at 		(2.0,-1.5) {$\mathbf{u}^{(2)}$};
	    \node(uabbv) at 					(2.0,-3.0) {$\cdots$};
	    \node[draw, shape=circle] (u3) at 		(2.0,-4.5) {$\mathbf{U}^{(N)}$};
	    \node[draw, shape=circle] (g) at 		(4.0,-2.25) {$\tensor{G}$};
	    \node[draw, shape=circle] (u4) at 		(6.0,-2.25) {$\mathbf{U}^{\tiny (N+1)}$};
    
        \draw [-, out=90, in=180, bend left=45] (x.north) to node[above]{$I_1$}(u1.west);
	    \draw [-, out=45, in=180, bend left=15] (x.north east) to node[above]{$I_2$}(u2.west);
        \draw [-, out=270, in=180, bend right=45] (x.south) to node[below]{$I_N$}(u3.west);
	    \draw [-, out=-45, in=180, bend right=15] (x.south east) to (uabbv.west);

        \draw [-, out=0, in=270, bend right=45] (u3.east) to node[below]{$R_N$}(g.south);
	    \draw (u4) -- (g) node[midway, above]{$R_{N+1}$};
	    \draw (u4) -- +(1.5cm,0) node[midway, above]{$I_{N+1}$};
	    
	    \end{scope}
	\end{tikzpicture}}\label{fig:trn_gap_step1}} \subfloat[Average along the axis of output channel]{\scalebox{0.55}[0.55]{
    \begin{tikzpicture}
	    \begin{scope}[every node/.style={minimum size=3.4em}]

		\node[draw, shape=circle] (x) at 		(0,-2.25) {$\mathbf{y}$};
        \node[draw, shape=circle] (u_n) at 		(2,-3.) {$\mathbf{U}^{(N)}$};
        \node[draw, shape=circle] (g) at 		(4,-3.) {$\mathbf{G}$};
        \node[draw, shape=circle] (u_last) at 		(6,-3.) {$\mathbf{U}^{(N+1)}$};
        
        \node(nothing) at 					(6.0,-4.5) {$$};
        
        \draw (x) -- (u_n) node[midway, above]{$I_{N}$};
        \draw (u_n) -- (g) node[midway, above]{$R_{N}$};
        \draw (g) -- (u_last) node[midway, above]{$R_{N+1}$};
        \draw (u_last) -- +(1.5cm,0) node[midway, above]{$I_{N+1}$};
        
	    \end{scope}
	\end{tikzpicture}}\label{fig:trn_gap_step2}} \subfloat[Linear Transformation]{
    \scalebox{0.55}[0.55]{
    \begin{tikzpicture}
	    \begin{scope}[every node/.style={minimum size=3.4em}]

		\node[draw, shape=circle] (x) at 		(0,-2.25) {$\mathbf{y}$};
        \node[draw, shape=circle] (u_last) at 		(2,-3.) {$\mathbf{U}^{(N+1)}$};
        
        \node(nothing) at 					(2.0,-4.5) {$$};
        
        \draw (x) -- (u_last) node[midway, above]{$I_{N}$};
        \draw (u_last) -- +(1.5cm,0) node[midway, above]{$I_{N+1}$};
        
	    \end{scope}
	\end{tikzpicture}}\label{fig:trn_gap_step3}}

\caption{Tensor network representation of GAP layer. \protect\subref{fig:trn_gap_step1} factor matrices $\mathbf{U}^{(n)}$ are replaced with vectors $\mathbf{u}^{(n)}$ for $i\in[1,N-1]$. \protect\subref{fig:trn_gap_step2} contraction operation between $\tensor{X}$ and factor matrices are performed. \protect\subref{fig:trn_gap_step3} most simplified version; the product of a matrix and a vector.} \label{fig:tensor_network_gap}
\end{figure}
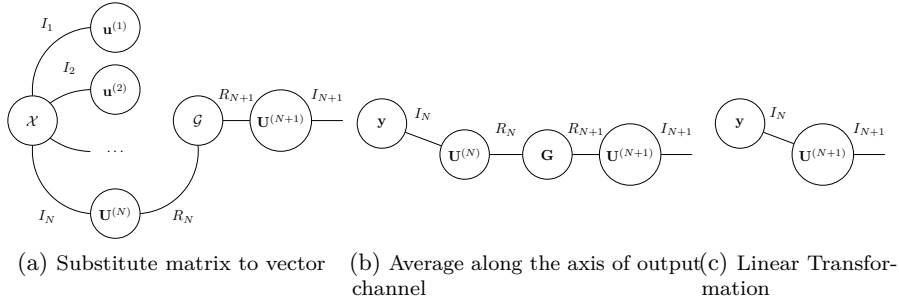

\section{Observations on Rank Constraints} \label{sc:observation_on_rank_constraints}
In this section, we provide a simple guideline for choosing one of the components of low-rank constraints enforced to TRL. In particular, we observe that the CP rank parameter and and the last Tucker/TT rank parameter  affects the dimension of the image of the function computed by the TRL. For example, as a consequence of this observation, if a TRL is used as the last layer of a network before a softmax activation function in a classification tasks with $O$ classes, setting the rank parameter to values greater than $O$ leads to unnecessary redundancy, while setting it to smaller values can detrimentally limit the expressiveness of the resulting classifier.

First, we start with a simple lemma necessary to provide the upper-bound on the dimension of the image of the regression function. We show that if an input matrix admits a factorization, then a function $f$ which maps such matrix to a linear space has an upper-bound on the dimension of the image.

\begin{lem} \label{lem:vec}
If $f: \mathbf{v} \mapsto \mathbf{A}\mathbf{B}\mathbf{v}$ with $ \mathbf{A}\in \mathbb{R}^{m\times R}$ and $ \mathbf{B}\in \mathbb{R}^{R \times n}$, then $dim\left(Im\left(f\right)\right) \leq R$ where $f: \mathbb{R}^{n} \rightarrow \mathbb{R}^{m}$.
\end{lem}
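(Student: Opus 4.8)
The plan is to exploit the factorization directly: every vector in the image of $f$ has the form $\mathbf{A}\mathbf{B}\mathbf{v}$, hence is a linear combination of the $R$ columns of $\mathbf{A}$ with coefficient vector $\mathbf{B}\mathbf{v}\in\mathbb{R}^{R}$. Concretely, first I would set $\mathbf{w}=\mathbf{B}\mathbf{v}$ and observe that $Im(f)=\{\mathbf{A}\mathbf{w} : \mathbf{w}\in Im(\mathbf{B})\}=\mathbf{A}\big(Im(\mathbf{B})\big)$, so in particular $Im(f)\subseteq \mathbf{A}\big(\mathbb{R}^{R}\big)=Im(\mathbf{A})$.

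Second, I would invoke the elementary fact that a linear map — here left-multiplication by $\mathbf{A}$, viewed as a map out of $\mathbb{R}^{R}$ — cannot increase the dimension of a subspace, so that $dim\big(Im(\mathbf{A})\big)=\operatorname{rank}(\mathbf{A})\leq R$ since $\mathbf{A}$ has only $R$ columns. Combining this with the inclusion from the first step and the monotonicity of dimension under subspace inclusion gives $dim\big(Im(f)\big)\leq dim\big(Im(\mathbf{A})\big)\leq R$, which is exactly the claim. Equivalently, the whole argument can be compressed into the rank inequality $\operatorname{rank}(\mathbf{A}\mathbf{B})\leq\operatorname{rank}(\mathbf{A})\leq R$ together with the identity $dim\big(Im(f)\big)=\operatorname{rank}(\mathbf{A}\mathbf{B})$.

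There is essentially no obstacle here; the only point that deserves a word of care is to name explicitly the standard linear-algebra fact being used — that the image of a composition factors through the intermediate space of dimension $R$, equivalently submultiplicativity/monotonicity of rank under matrix products — rather than treating it as self-evident. No appeal to any earlier result in the paper is required, and the bound is tight whenever $\mathbf{A}$ has full column rank and $\mathbf{B}$ is surjective onto $\mathbb{R}^{R}$.
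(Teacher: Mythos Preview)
Your proposal is correct and follows essentially the same approach as the paper: both arguments observe that every output $\mathbf{A}\mathbf{B}\mathbf{v}$ lies in the column space of $\mathbf{A}$, so $dim(Im(f))\leq \operatorname{rank}(\mathbf{A})\leq R$. The only cosmetic difference is that the paper phrases this by expanding the columns of $\mathbf{A}\mathbf{B}$ as linear combinations of the columns of $\mathbf{A}$, whereas you state the inclusion $Im(f)\subseteq Im(\mathbf{A})$ directly.
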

\begin{proof}
Given such function f, the dimension of the image of the function f is $dim\left(Im\left(f\right)\right) = dim\left(span\left( \{ f\left( \mathbf{v} \right) \mid \mathbf{v}\in\mathbb{R}^{n} \} \right) \right)$, which is the dimension of the space that is spanned by column vectors of $\mathbf{X}$ = $\mathbf{A}\mathbf{B}$. That is, $dim\left(span\left( \{ f\left( \mathbf{v} \right) \mid \mathbf{v}\in\mathbb{R}^{n} \} \right) \right) = dim\left(span \left( \{ \mathbf{x}_1, \cdots , \mathbf{x}_n \} \right) \right)$. It is clear that each column vector of the matrix $\mathbf{X}=\mathbf{A}\mathbf{B}$ is linear combination of column vectors of $\mathbf{A}$ from the equation $(\mathbf{A}\mathbf{B})_{:i} = \mathbf{A}\mathbf{b}_i=\sum_{r=1}^{R} \mathbf{a}_r (\mathbf{b}_i)_r$ where $\mathbf{b}_i$ denotes $i$-th column vector of $\mathbf{B}$. Since matrix $\mathbf{A}$ is in the space $\mathbb{R}^{m\times R}$, the dimension of the span of the column vectors of $\mathbf{X}$ is upper-bounded by $R$, namely $dim\left(Im\left(f\right)\right) = dim\left(span \left( \{ \mathbf{x}_1, \cdots , \mathbf{x}_n \} \right) \right) \leq rank(\mathbf{A})\leq R$.
\end{proof}

Using Lemma \ref{lem:vec}, we can provide upper-bounds on the dimension of the image spanned by the regression function of a TRL for different tensor rank constraints. 

\begin{prop} \label{prop:bottleneck}
Let $f: \mathbb{R}^{I_1\times \cdots \times I_N} \rightarrow \mathbb{R}^{I_{N+1}}$ where $f: \tensor{X} \mapsto \tensor{W}_{(N+1)}  vec(\tensor{X})$. The following hold:
\begin{itemize}
\item if $\tensor{W}$ admits a  TT decomposition of rank $(1,R_1, \cdots ,R_{N},1)$, then $dim(Im(f)) \leq R_N$,
\item if $\tensor{W}$ admits a Tucker decomposition of  rank $(R_1,R_2, \cdots ,R_{N+1})$, then  $dim(Im(f)) \leq R_{N+1}$,
\item if $\tensor{W}$ admits a CP decomposition of  rank $R$, then  $dim(Im(f)) \leq R$.
\end{itemize}
\end{prop}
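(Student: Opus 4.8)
The plan is to handle the three bullets uniformly: in each case I would rewrite the linear map $\tensor{X}\mapsto\tensor{W}_{(N+1)}\,vec(\tensor{X})$ as a composition $vec(\tensor{X})\mapsto\mathbf{A}\mathbf{B}\,vec(\tensor{X})$ in which the inner dimension of $\mathbf{A}\mathbf{B}$ is exactly the claimed rank parameter, and then invoke Lemma~\ref{lem:vec}. Since $vec$ is a linear isomorphism between $\mathbb{R}^{I_1\times\cdots\times I_N}$ and $\mathbb{R}^{I_1\cdots I_N}$, the image of $f$ coincides (as a set) with the image of the vectorized map $\mathbf{v}\mapsto\mathbf{A}\mathbf{B}\mathbf{v}$, so the bound from Lemma~\ref{lem:vec} transfers directly. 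Note also that the $f$ appearing in the proposition carries no bias, so $Im(f)$ is a genuine linear subspace and no affine correction is needed.

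The three factorizations required are precisely the ones already derived in Section~\ref{sc:trl} (with the bias dropped). For CP, Eq.~\eqref{eq:formulation_cp} gives $\tensor{W}_{(N+1)}\,vec(\tensor{X}) = \mathbf{A}^{(N+1)}\left(\mathbf{A}^{(N)}\odot\cdots\odot\mathbf{A}^{(1)}\right)^{T}vec(\tensor{X})$, so I would set $\mathbf{A}=\mathbf{A}^{(N+1)}\in\mathbb{R}^{I_{N+1}\times R}$ and $\mathbf{B}=\left(\mathbf{A}^{(N)}\odot\cdots\odot\mathbf{A}^{(1)}\right)^{T}\in\mathbb{R}^{R\times(I_1\cdots I_N)}$ and apply Lemma~\ref{lem:vec} to get $dim(Im(f))\le R$. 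For Tucker, Eq.~\eqref{eq:formulation_t} gives $\tensor{W}_{(N+1)}\,vec(\tensor{X}) = \mathbf{U}^{(N+1)}\tensor{G}_{(N+1)}\left(\mathbf{U}^{(N)}\otimes\cdots\otimes\mathbf{U}^{(1)}\right)^{T}vec(\tensor{X})$; I would take $\mathbf{A}=\mathbf{U}^{(N+1)}\in\mathbb{R}^{I_{N+1}\times R_{N+1}}$ and fold the remaining two factors into $\mathbf{B}=\tensor{G}_{(N+1)}\left(\mathbf{U}^{(N)}\otimes\cdots\otimes\mathbf{U}^{(1)}\right)^{T}\in\mathbb{R}^{R_{N+1}\times(I_1\cdots I_N)}$, giving $dim(Im(f))\le R_{N+1}$. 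For TT, the weight tensor has order $N+1$, hence cores $\tensor{G}^{(i)}\in\mathbb{R}^{R_{i-1}\times I_i\times R_i}$ with $R_0=R_{N+1}=1$; Eq.~\eqref{eq:formulation_tt} gives $\tensor{W}_{(N+1)}\,vec(\tensor{X}) = \left(\tensor{G}^{(N+1)}\right)_{(2)}\left(\tensor{G}^{<N+1}\right)_{(N+1)}vec(\tensor{X})$, and with $\mathbf{A}=\left(\tensor{G}^{(N+1)}\right)_{(2)}$ and $\mathbf{B}=\left(\tensor{G}^{<N+1}\right)_{(N+1)}$ Lemma~\ref{lem:vec} yields $dim(Im(f))\le R_N$.

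The one place that needs care, and what I expect to be the main (minor) obstacle, is the shape bookkeeping in the TT case: one must confirm that $\left(\tensor{G}^{(N+1)}\right)_{(2)}$ has size $I_{N+1}\times(R_N R_{N+1}) = I_{N+1}\times R_N$ using $R_{N+1}=1$, and that $\tensor{G}^{<N+1}\in\mathbb{R}^{I_1\times\cdots\times I_N\times R_N}$ so that its mode-$(N{+}1)$ unfolding $\left(\tensor{G}^{<N+1}\right)_{(N+1)}$ has exactly $R_N$ rows. This is where the rank-tuple convention $(1,R_1,\dots,R_N,1)$ for an $(N{+}1)$-order tensor enters and fixes the bound at $R_N$ rather than at some other rank index; once these dimensions are checked, each bullet is a one-line application of Lemma~\ref{lem:vec}. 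One could also add the remark that retaining a bias $\mathbf{b}$ would only enlarge $Im(f)$ to an affine subspace, turning the bounds into $R+1$, $R_{N+1}+1$ and $R_N+1$ respectively.
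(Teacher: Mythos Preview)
Your proposal is correct and follows essentially the same approach as the paper: the paper's proof works out the TT case by writing $f(\tensor{X}) = \tensor{G}^{(N+1)}_{(2)}\,\tensor{G}^{<N+1}_{(N+1)}\,vec(\tensor{X})$, checks the two factors have sizes $I_{N+1}\times R_N$ and $R_N\times(I_1\cdots I_N)$, and applies Lemma~\ref{lem:vec}, then dismisses Tucker and CP as analogous via Eq.~\eqref{eq:tc_mtr} and Eq.~\eqref{eq:cp_mtr}. Your write-up is in fact more complete, since you spell out the $\mathbf{A},\mathbf{B}$ choices for all three decompositions and flag the $R_{N+1}=1$ shape check explicitly.
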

\begin{proof}
If $\tensor{W}$ admits a TT decomposition with TT rank $\{1,R_1, \cdots ,R_{N},1\}$, then by Eq.~\eqref{eq:tt_def}, we have $\tensor{W}_{i_{1} , i_{2} , \ldots , i_{N+1}} = \mathbf{G}^{(1)}_{i_1  , :} \times \mathbf{G}^{(2)}_{:, i_2 , :} \times \cdots \times \mathbf{G}^{(N+1)}_{: , i_{N+1}}$. Using the matricization of $\tensor{W}$ given by Eq.~\eqref{eq:tt_mtr}, we can write $f$ as follows;
\begin{equation}
\begin{split}
f(\tensor{X}) 	&= \tensor{W}_{(N+1)} \times vec(\tensor{X})\\
			&= \tensor{G}^{(N+1)}_{(2)} \tensor{G}^{<N+1}_{(N+1)} \times vec(\tensor{X})
\end{split}
\end{equation}
and consequently, since $\tensor{G}^{(N+1)}_{(2)}$ and  $\tensor{G}^{<N+1}_{(N+1)}$ are of size
$\mathbb{R}^{I_{N+1} \times R_N}$ and $\mathbb{R}^{R_N \times I_1 I_2 \dots I_N}$ respectively,  we have 
$dim(Im(f)) \leq R_N$ by Lemma \ref{lem:vec}.

The other two points can be proven in a similar fashion using  Eq.~\eqref{eq:tc_def} and~\eqref{eq:tc_mtr} for Tucker, and Eq.~\eqref{eq:cp_def} and~\eqref{eq:cp_mtr} for CP.
\end{proof}

We have shown that the dimension of the image mapped by the function $f$ is upper-bounded by one of the tensor rank parameters. We refer to this specific component of the rank tuple as the \emph{bottleneck rank}.

\begin{defn} \label{df:bottleneck_rank}
Given a regression tensor $\tensor{W}\in\mathbb{R}^{I_1 \times \cdots \times I_N}$, if $\tensor{W}$ admits a Tucker Decomposition with rank $(R_1,R_2, \cdots ,R_{N+1})$, we define the rank $R_{N+1}$ as the \emph{bottleneck rank}. Similarly, if $\tensor{W}$ admits a TT decomposition with \emph{TT-rank} $(1,R_1, \cdots ,R_{N},1)$, we define $R_N$ as the \emph{bottleneck rank}.
\end{defn}

This observation on the rank constraints used in a tensor regression layer can provide a simple guideline for choosing the bottleneck rank. For instance, when a TRL is used as the last layer of an architecture for a classification task, setting the \emph{bottleneck rank} to a value smaller than the number of classes $O$  could limit the expressiveness of TRL~(which we will empirically demonstrate in Section~\ref{sc:exp_mnist}), while setting it to a value higher than $O$ could lead to redundancy in the model parameters.

\section{Experiments} \label{sc:exp}
In this section we provide experimental evidence which 1) supports our analysis on TRLs in Section~\ref{sc:observation_on_rank_constraints} and 2) investigate the compressive and regularization power of the different low-rank constraints. We present experiments with tensor regression layer using CP, Tucker and TT decomposition on the benchmark datasets  MNIST~\cite{lecun1998mnist}, FashionMNIST~\cite{xiao2017fashion}, CIFAR-10 and CIFAR-100~\cite{krizhevsky2009learning}.

\begin{figure}
\resizebox{\textwidth}{!}{
  \centering
  \includegraphics[]{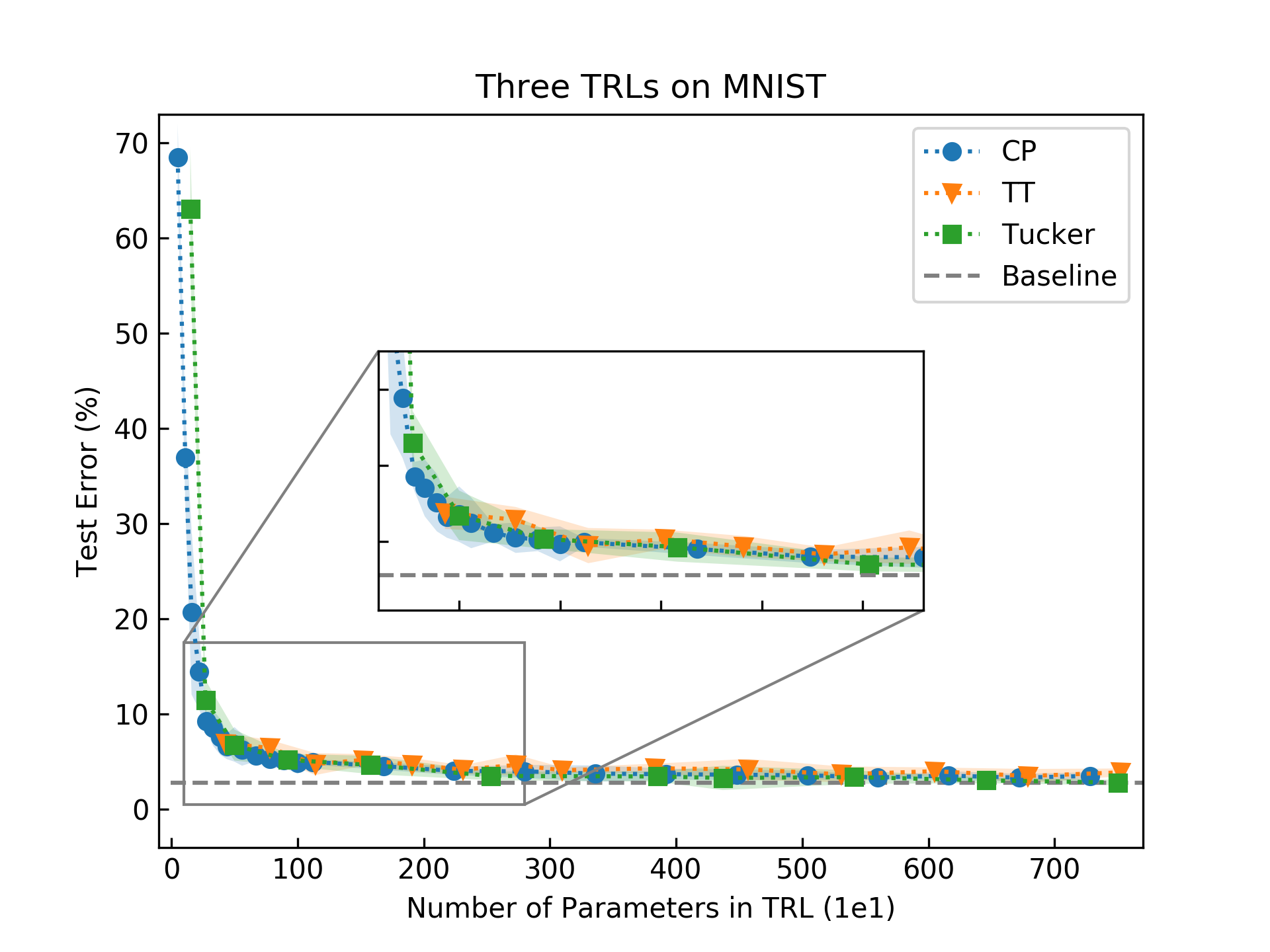}\hfill
  \includegraphics[]{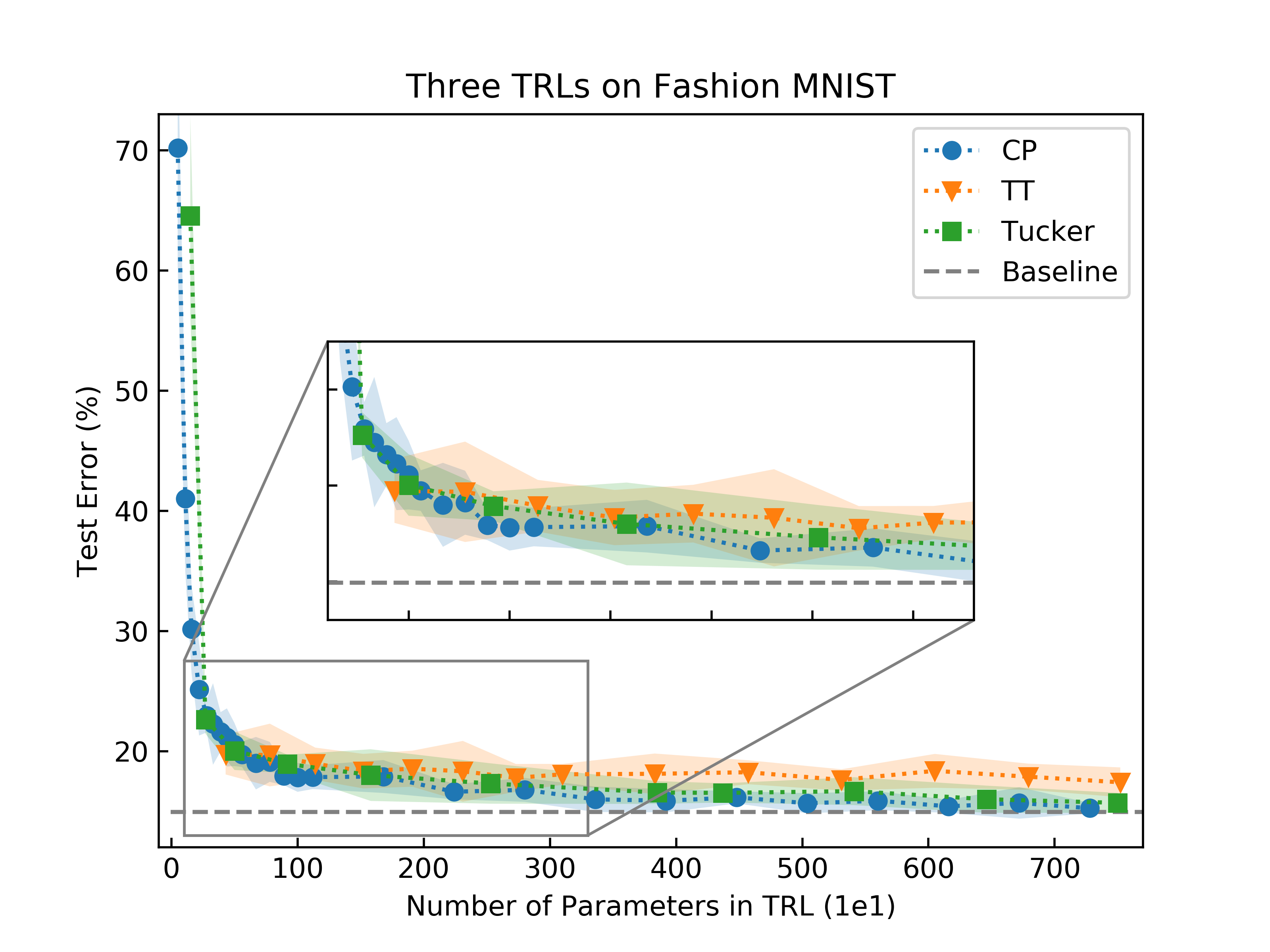}
  
  \caption{Best viewed in color. Test error as a function of the number of parameters in TRL. Performances of three types of TRL (CP, TT and Tucker) are compared in terms of regularization effect of TRL. \emph{Left:} MNIST dataset. \emph{Right:} Fashion-MNIST dataset. For all entries, we run experiments for 5 times and presented with confidence interval with critical value of $95\%$.}  \label{fig:mnist}}
\end{figure}

\subsection{MNIST and Fashion-MNIST dataset} \label{sc:exp_mnist}

MNIST dataset \cite{lecun1998mnist} consists of $28 \times 28$ 1-channel images of handwritten digits from $0$ to $9$. The dataset contains $60$k training and a test set of $10$k examples. The purpose of the experiment is to provide insights on regularization power of different low-rank constraints. We set our baseline classifier to be CNN with $2$ convolutional layers followed by $1$ fully-connected layer. Rectified linear units (ReLU) \cite{nair2010rectified} were introduced between each layer as non-linear activations. We tested the model with three tensor approximations; CP, Tucker and TT. By applying various low-rank constraints, we aim to show that as such constraints become larger, the smaller the approximation error becomes, therefore the accuracy of the low-rank model approaches to that of the model without regularizations (\emph{i.e.} low-rank constraints).

We concisely review the choice of low-rank constraints for Tucker, TT and CP models. Detailed experimental configuration is available 
online\footnote{\url{https://github.com/xwcao/LowRankTRN}}. Given an output tensor from final convolutional layer $\tensor{X}\in\mathbb{R}^{S \times 7 \times 7 \times 32}$ where $S$ denotes the number of samples in one batch, we constrain the weight tensor $\tensor{W}$ with the rank of Tucker decomposition $\{ R_i \}_{i=1}^{4}$. Following Proposition~\ref{prop:bottleneck}, the \emph{bottleneck rank} is set to $10$ for TRL with Tucker and TT constraints. 

Following~\cite{krizhevsky2012imagenet}, we initialized the weights in each layer from zero mean normal distribution with standard deviation $\sigma = 0.1$. The bias term of each layer is initialized with constant $0.1$. For Tucker-TRL, we conducted a total of $32$ experiments. This is per each low-rank Tucker-TRL where $R_i$ were set with constraints $R_1\leq7$, $R_2\leq7$ and $R_3\leq32$ respectively. A set of experiments were conducted for TT-TRL as well. We set \emph{TT-rank} to be $R_2\leq7$ and $R_3\leq32$. For CP-TRL, we simply evaluated the performance with rank $r$ from a set $[1,100]$.

We evaluate empirical performance of TRL with another MNIST-like dataset: Fashion-MNIST. The dataset consists of $60$k training and $5$k testing images where each sample belongs to one of ten classes of fashion items such as shoes, clothes and hats. We used the same CNN architecture and hyperparamters as for the MNIST dataset.

Experimental outcomes for both datasets are provided in Figure~\ref{fig:mnist} where we can see that all low-rank approximation models exhibit similar performance in both MNIST and Fashion-MNIST dataset. As for the regularization effect, however, it is observable that as we relax the low-rank constraints, the accuracies of each model gradually converge to that of baseline model. This result illustrates the effect of regularization power that low-rank constraints provide. We also conducted experiments where we used GAP layer instead of fully-connected layer on both MNIST and Fashion-MNIST dataset. In both cases, the model performed very poorly compared to that of fully connected layer; $71.17\%$ with MNIST dataset and $68.40\%$ with Fashion-MNIST.

We conducted similar experiment to provide a empirical support to Proposition~\ref{prop:bottleneck}. In Section~\ref{sc:observation_on_rank_constraints} we showed that the dimension of the image of TRL is upper-bounded by the \emph{bottleneck rank}. We conducted experiments where we fix the \emph{bottleneck rank} to be one of $\{1,2,5,10\}$. The experimental result presented in Figure~\ref{fig:mnist_comaprison} shows the clear distinctions among models with different \emph{bottleneck ranks}. It is observable that \emph{bottleneck rank} affects the test accuracy by providing upper-bound to the dimension of the image of TRL.

\begin{figure}
  \centering
  \includegraphics[width=0.5\textwidth]{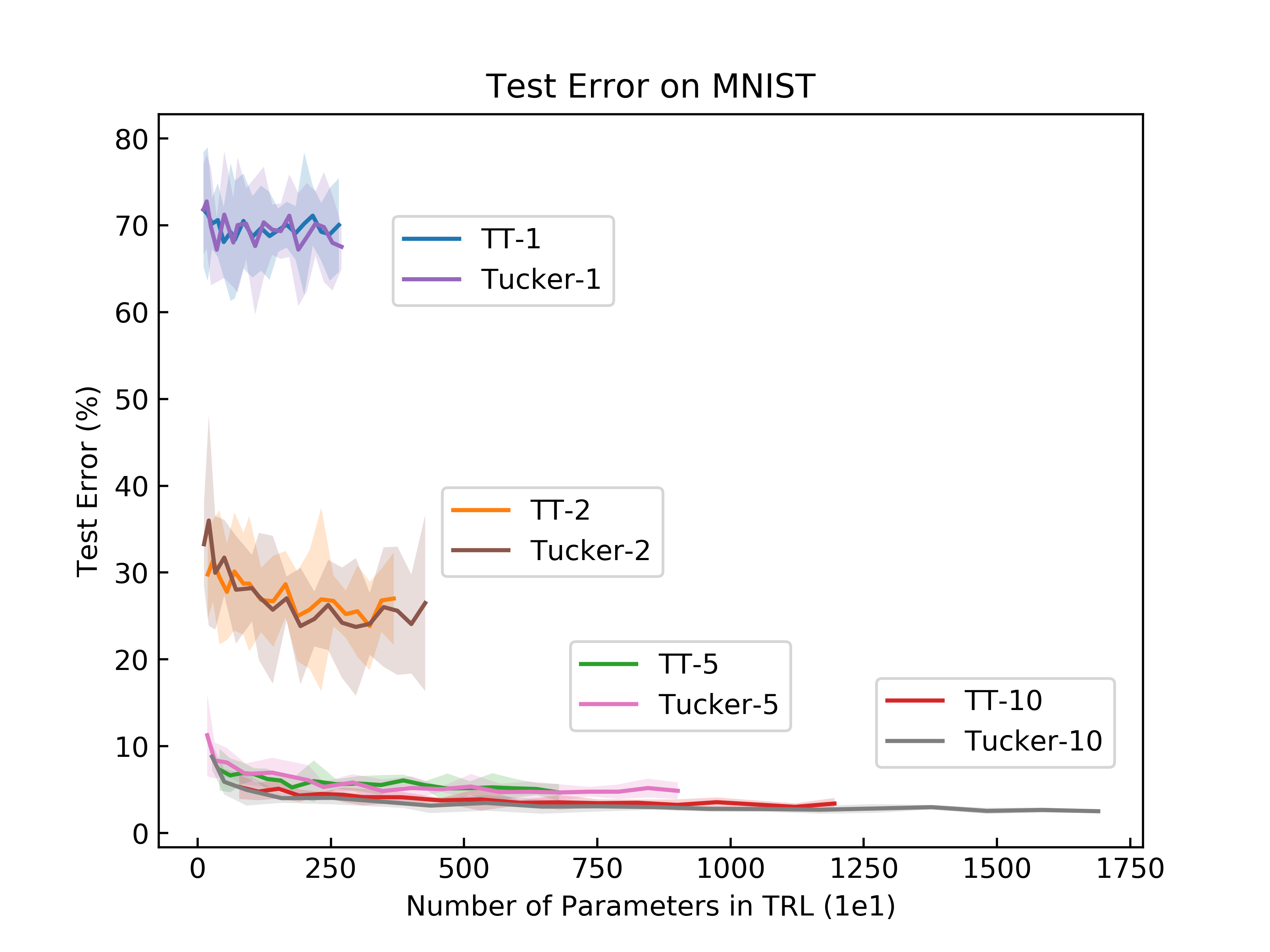}\hfill
  \includegraphics[width=0.5\textwidth]{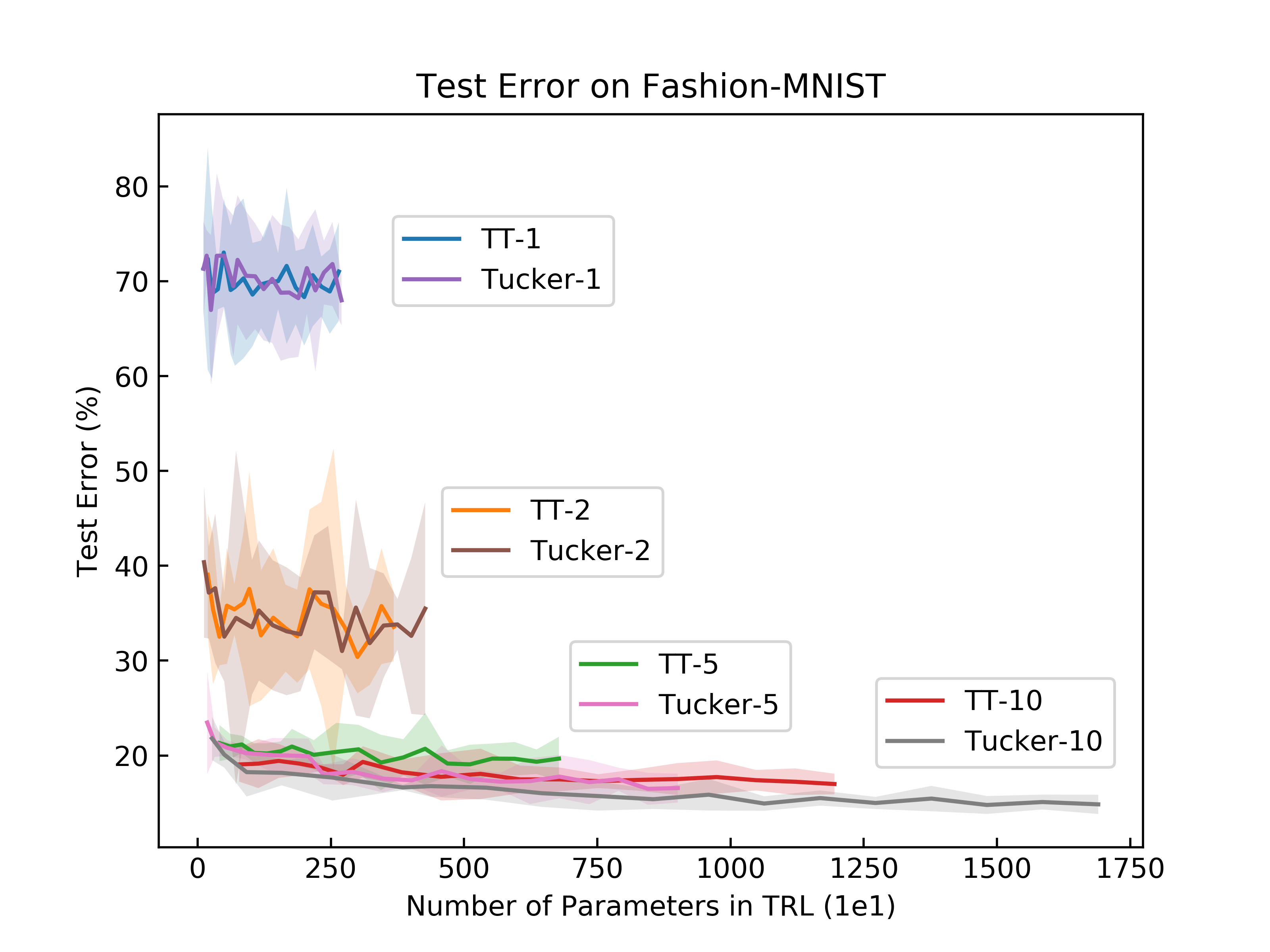}
  
  \caption{Comparison of test error of TRL via TT and Tucker decomposition. We relax the low-rank constraints while fixing the \emph{bottleneck rank} (see Definition~\ref{df:bottleneck_rank}) of both TT and Tucker decomposition. TT-n (resp.~Tucker-n) denotes a model with the \emph{bottleneck rank} to be n (resp.~n). \emph{Left:} MNIST dataset. \emph{Right:} Fashion-MNIST dataset.}  \label{fig:mnist_comaprison}
\end{figure}

\subsection{CIFAR-10 and CIFAR-100 dataset with Residual Networks} \label{sc:exp_cifar10}

We evaluate the performance of tensor regression layer with another benchmark dataset; CIFAR-10 and CIFAR-100 with deep CNNs. CIFAR-10 dataset \cite{krizhevsky2009learning} consists of $60$k training and $10$k test images from 10 classes; airplane, automobile, bird, cat, deer, dog, frog, horse, ship and truck. Similarly, CIFAR-100 consists of colored images of 100 classes \cite{krizhevsky2009learning}. We employ Residual structure network \cite{he2016identity} and replaced the GAP layer with CP, Tucker and TT-TRL. Following~\cite{he2016identity}, we trained the model with initial learning rate of $0.1$ with momentum of $0.9$. The learning rate is multiplied by $0.1$ at $40$k and $60$k iteration steps and the training process is terminated at $80$k steps. The size of each batch was set to  $128$. We set the weight decay to  $0.0002$. The image is pre-processed with whitening normalization followed by the random horizontal flip and cropping with padding size of 2 pixels on each side.

The experimental result are reported in Table~\ref{tb:resnet} for a $32$-layer Residual network \cite{he2016identity} on CIFAR-10 and a $164$-layer ResNet on CIFAR-100. In order to compare the compression rate, we set the baseline model to be the Residual network with fully-connected layer instead of GAP layer. The errors in  Table~\ref{tb:resnet} are obtained by choosing the with the best validation score. The experiment shows that CP-TRL achieves comparable test accuracy to ResNet with GAP layer, however, GAP layer performed the best in terms of both compression and accuracy.

\begin{table}
\begin{tabular}{ l  c  ccccc c} 
 \toprule
 
\multirow{2}{*}{Layer Type} & \multirow{2}{*}{ Rank } & \multicolumn{3}{c}{CIFAR-10} & \multicolumn{3}{c}{CIFAR-100}\\
\cmidrule{3-8}
 & & Vali & Test &CR& Vali & Test &CR \\
 
 \midrule
  FC 	&- & 8.36 & 8.28 & 1.0 & 36.68 & 36.36 &1.0 \\
  GAP 	&- & 7.62 & 8.18 & 64.0 & 29.68 & \bf{29.42} &64.0 \\
  
  \midrule
  
  \multirow{3}{*}{CP-TRL} &5  & 8.32 & 8.43 &91.0 & 34.64 & 36.01 &455.1\\
                          &50 & 8.18 & 8.11 &9.1 & 30.92 & 30.73 &45.5\\
                          &100  & 8.42 & \bf{8.05} &4.6& 31.28  & 31.72 &22.7\\ 
 
 \midrule
 
  \multirow{3}{*}{Tucker-TRL} &$(4,4,4,L)$  & 8.30 & 8.39 & 41.0 & 33.34 & 32.26 &24.5\\
                              &$(8,8,8,L)$  & 7.78 & 8.39 & 7.0 & 30.86 & 31.53 &6.6\\
                              &$(8,8,64,L)$ & 7.92 & 8.58 & 0.9 & TF & TF &$\approx$1.0\\
 
 \midrule
 
  \multirow{3}{*}{TT-TRL} &$(1,1,1,L,1)$  & 8.18 & 8.47 & 54.2 & 31.12 & 30.95 &25.0\\
                          &$(1,8,8,L,1)$  & 7.86 & 9.13 & 7.1  & 30.28 & 31.08 &6.6\\
                          &$(1,8,64,L,1)$ & 8.36 & 8.56 & 0.9  & 31.64 & 32.64 &$\approx$1.0\\

 \bottomrule 
\end{tabular}
\caption{Comparisons of errors (\%) of ResNet-32 (resp. ResNet-164) model with different layers after the final convolutional layer on CIFAR-10 (resp. CIFAR-100) dataset. The training error at the termination of training stage for all models resulted to be $0.0$. L denotes 10 and 100 for CIFAR-10 and CIFAR-100 dataset respectively. FC = Fully-Connected, TF = Training Failed, and CR = Compression Rate.} \label{tb:resnet}
\end{table}

\begin{table}[t]
\centering
\resizebox{\textwidth}{!}{\subfloat[MNIST]{
\begin{tabular}{ l lcc lcc lcc lcc }
\toprule
\hfil\multirow{3}{*}{Layer}\hfill		&\multicolumn{12}{c}{Size of Training Data}	\\
                     							\cmidrule{2-13}
									& \multicolumn{3}{c}{100} 	& \multicolumn{3}{c}{500} 	& \multicolumn{3}{c}{2,000} & \multicolumn{3}{c}{15,000} 			\\
                                    \cmidrule{2-13}
                                    & Rank & Vali & Test		& Rank & Vali & Test		& Rank & Vali & Test		& Rank & Vali & Test					\\
\midrule
FC 				&- 		& 24.48 & 20.73	&- 		& 11.48	&7.95	&-		& 4.18 & 3.86 &-		& 1.64 & 1.33				\\
FC-L2 			&- 		& 20.00 & 19.25	&- 		& 8.00	&7.86	&-		& 3.24 & 3.31 &-		& 1.50 & 1.35				\\
FC DO 			&- 		& 17.26 & 17.86	&- 		& 5.80	&5.98	&-		& 2.86 & 2.53 &-		& 1.16 & 1.21				\\
GAP 			&- 		& 25.30 & 22.10	&- 		& 11.02	&10.46	&-		& 4.54 & 4.85 &-		& 2.78 & 3.15				\\
\midrule				
CP-TRL 			&10				& 25.56 & 17.88		& 30			& 11.80	& 8.91		& 30			& 4.42 & 4.06		& 30			& 2.30	& 2.04		\\
CP-TRL DO 		&10				& 14.82 & 16.06		& 30			& 6.28	& 5.91		& 30			& 2.42 & 2.75		& 30			& 1.82	& 1.38		\\
Tucker-TRL		&[7,7,30,10]	& 25.98 & 22.45		& [7,7,32,10]	& 9.04	& 8.29		& [7,7,7,10]	& 4.08 & 3.73		& [7,7,15,10]	& 1.76	& 1.65		\\
Tucker-TRL DO 	&[7,7,7,10]		& 12.70 & \bf{13.11}& [7,7,30,10]	& 5.16	& 5.49		& [7,7,32,10]	& 2.56 & \bf{2.28}	& [7,7,30,10]	& 1.26	& \bf{1.17}	\\
TT-TRL 			&[1,7,15,10,1]	& 23.24 & 20.39		& [1,7,30,10,1]	& 9.08	& 8.59		& [1,7,15,10,1]	& 3.64 & 3.82		& [1,7,30,10,1]	& 1.66	& 1.36		\\
TT-TRL DO 		&[1,7,7,10,1]	& 14.96 & 14.85		& [1,7,30,10,1]	& 5.42	& \bf{5.18}	& [1,7,30,10,1]	& 2.42 & \bf{2.28}	& [1,7,32,10,1]	& 1.24	& 1.31		\\
\bottomrule

\end{tabular}\label{tb:regularization_mnist}}
}
\\
\resizebox{\textwidth}{!}{
\subfloat[SVHN]{
\begin{tabular}{ l lcc lcc lcc lcc }
\toprule
\hfil\multirow{3}{*}{Layer}\hfill		&\multicolumn{12}{c}{Size of Training Data}	\\
                     							\cmidrule{2-13}
									& \multicolumn{3}{c}{100} 	& \multicolumn{3}{c}{500} 	& \multicolumn{3}{c}{2,000} & \multicolumn{3}{c}{15,000} 			\\
                                    \cmidrule{2-13}
                                    & Rank & Vali & Test		& Rank & Vali & Test		& Rank & Vali & Test		& Rank & Vali & Test					\\
\midrule
FC 				&- 	&78.30  &78.33  &- &49.74  &49.56  &- &27.92 &28.57  &-& 16.66 & 17.64    \\
FC-L2 			&- 	&76.64  &75.73  &- &46.06  &44.25  &- &26.54 &27.11  &-& 15.68 & 16.80    \\
FC DO 			&- 	&75.12  &74.37  &- &41.20  &41.41  &- &26.14 &28.27  &-& 19.64 & 20.22    \\
GAP 			&- 	&86.36  &84.48  &- &75.38  &73.44  &- &72.48 &69.58  &-& 80.56 & 75.72    \\
\midrule				
CP-TRL 			&30 			&79.02 &77.65 		&30 			&55.58 &57.34 		&10 			&32.46 &33.85 		&30 			& 20.32	& 21.11\\
CP-TRL DO 		&30 			&75.78 &73.52 		&30 			&37.34 &39.85 		&30 			&27.92 &28.90 		&30 			& 21.38	& 21.80\\
Tucker-TRL		&[8,8,16,10] 	&79.78 &80.27 		&[8,8,16,10] 	&43.84 &44.29 		&[8,8,32,10] 	&24.48 &24.70 		&[8,8,16,10] 	& 15.26	& 17.27\\
Tucker-TRL DO 	&[8,8,32,10] 	&72.42 &\bf{71.28} 	&[8,8,32,10] 	&34.84 &\bf{34.71} 	&[8,8,16,10] 	&22.32 &\bf{24.50} 	&[8,8,64,10] 	& 15.08	& \bf{16.23}\\
TT-TRL 			&[1,8,64,10,1] 	&78.64 &77.73 		&[1,8,64,10,1] 	&44.10 &43.78 		&[1,8,64,10,1] 	&24.78 &25.50 		&[1,8,32,10,1] 	& 14.76	& 16.28\\
TT-TRL DO 		&[1,8,64,10,1] 	&73.36 &71.63	 	&[1,8,64,10,1] 	&36.64 &36.37 		&[1,8,64,10,1] 	&22.98 &24.60 		&[1,8,64,10,1] 	& 15.24	& 16.52\\
\bottomrule

\end{tabular}\label{tb:regularization_svhn}}
}
\\
\resizebox{\textwidth}{!}{
\subfloat[CIFAR-10]{
\begin{tabular}{ l lcc lcc lcc lcc }
\toprule
\hfil\multirow{3}{*}{Layer}\hfill		&\multicolumn{12}{c}{Size of Training Data}	\\
                     							\cmidrule{2-13}
									& \multicolumn{3}{c}{100} 	& \multicolumn{3}{c}{500} 	& \multicolumn{3}{c}{2,000} & \multicolumn{3}{c}{15,000} 			\\
                                    \cmidrule{2-13}
                                    & Rank & Vali & Test		& Rank & Vali & Test		& Rank & Vali & Test		& Rank & Vali & Test					\\
\midrule
FC 				&- 	& 78.14 & 76.39 		&- & 64.44 & 61.82 &- &27.92  &28.57  &-& 40.70 & 41.88  \\
FC-L2 			&- 	& 76.86 & 75.77 		&- & 62.48 & 61.66 &- &26.54  &27.11  &-& 39.54 & 41.40  \\
FC DO 			&- 	& 73.58 & 73.93 		&- & 61.38 & 61.26 &- &26.14  &28.27  &-& 42.08 & 42.33  \\
GAP 			&- 	& 71.90 & \bf{71.74} 	&- & 60.92 & 60.14 &- &72.48  &69.58  &-& 57.26 & 57.56  \\
\midrule				
CP-TRL 			&10    			& 77.94 & 77.76	& 30  			& 67.84  & 67.05    &10    			&32.46 &33.85    	& 30   			& 45.16  & 46.59 \\
CP-TRL DO 		&10    			& 74.46 & 75.29	& 30  			& 62.40  & 61.11    &30    			&27.92 &28.90    	& 30   			& 47.20  & 48.57\\
Tucker-TRL		&[8,8,64,10]    & 77.60 & 77.21	& [8,8,8,10]  	& 63.84  & 64.63    &[8,8,32,10]    &24.48 &24.70   	& [8,8,64,10]   & 40.28  & 41.36\\
Tucker-TRL DO 	&[8,8,32,10]    & 74.02 & 73.59	& [8,8,32,10]  	& 59.58  & 58.54    &[8,8,16,10]    &22.32 &\bf{24.50} 	& [8,8,32,10]   & 40.20  & 40.51\\
TT-TRL 			&[1,8,8,10,1]   & 74.78 & 75.11	& [1,8,64,10,1] & 63.94  & 62.70    &[1,8,64,10,1]  &24.78 &25.50    	& [1,8,64,10,1] & 38.16  & \bf{38.57}\\
TT-TRL DO 		&[1,8,32,10,1]	& 72.44 & 73.51	& [1,8,32,10,1] & 57.88  &\bf{58.30}&[1,8,64,10,1]  &22.98 &24.60    	& [1,8,64,10,1] & 38.38  & 38.64\\
\bottomrule

\end{tabular}\label{tb:regularization_cifar10}}
}

\caption{On the regularization effect of TRL. The comparison of test/validation errors (\%) with different numbers of training samples is provided. Each entry in the column \# Train refers to the number of samples used to train each model. We used the same $5$k validation samples to select the best model for all experiments. FC-L2 = FC layer with L2 regularization. DO = Dropout.}\label{tb:regularization}
\end{table}

\subsection{On the regularization effect of TRL} \label{sc:exp_reg}

In this section, we investigate the performance of TRL focusing on its function as a regularization to convolutional neural networks. We used shallow CNNs with different train/validation split where the number of the training samples were kept to be small. We compare the performance of TRL with fully-connected layer and GAP layer. To improve the regularization performance, Dropout \cite{srivastava2014dropout}  and weight decay were included in the comparison. The training datasets are obtained by randomly selecting samples from the initial training dataset, and keeping $5$k samples for validation  for each train/validation split.

We evaluate the performance of each model on three datasets; MNIST, Street View House Numbers (SVHN) \cite{netzer2011reading} and CIFAR-10. SVHN dataset consists of colored images of house numbers where it contains $73$k and $26$k samples for training and testing respectively. We employed a CNN with two (resp. three) convolutional layers for MNIST dataset (resp. CIFAR-10 and SVHN dataset). The dropout is inserted after the final convolutional layer.

The rank of each TRL is selected based on the dimensions of the output tensor as in Section~\ref{sc:exp_mnist}. We run experiments with early stopping for all experiments where the maximum steps is set to $100k$ for MNIST and to $10k$ for SVHN and CIFAR-10. The best rate for dropout is selected based on the validation accuracy where the hyper-parameter is samples from $\{0.3, 0.5, 0.7\}$. The decay factor for L2-regularization is similarly chosen from the set $\{0.01,0.001,0.0001\}$.

The outcome of the experiment is presented in Table~\ref{tb:regularization}. An unique behavior of TRL is observed in Table~\ref{tb:regularization} where in most of the settings using dropout with Tucker and TT-TRL achieves better test accuracy than using dropout with a fully-connected layer.

\section{Conclusion} \label{sc:conc}
Tensor regression layer replaces the last flattening operation and fully connected layers with tensor regression with low tensor rank structure. We investigate tensor regression layer with various tensor decompositions. TRL with CP, Tucker and TT decompositions were presented and investigated in this work. We show that the learning procedure for each type of tensor regression layer can be derived using tensor algebra. An analysis on the upper bound of the dimension of the image of the regression function is presented, where we show that the rank of Tucker decomposition and \emph{TT ranks} affect such dimension.

We evaluated proposed models using benchmark dataset (i.e. handwritten digits and natural images). We did not observe significant differences in accuracy among TRLs with various decompositions for MNIST and CIFAR-10 dataset. The result using the state-of-the-art deep convolutional model shows that when compared to a baseline model with fully-connected layer, TRL with CP decomposition achieved the rate of compression $91.0$ with the sacrifice of accuracy $0.3\%$. When compared to the Residual network with GAP layer, our model empirically exhibits comparable performance in both accuracy and compression rate.

\clearpage
\newpage

\bibliographystyle{plain}
\bibliography{trl}

\end{document}